\title{Discerning Temporal Difference Learning}
\author{
    Jianfei Ma
}
\DeclareMathOperator*{\argmin}{arg\,min}
\newcommand{\be}{\mathbf{e}}
\newcommand{\bPhi}{\boldsymbol{\Phi}}
\newcommand{\bt}{\boldsymbol{\theta}}
\newcommand{\bPi}{\boldsymbol{\Pi}}
\newcommand{\bI}{\mathbf{I}}
\newcommand{\bPpi}{\mathbf{P}_{\pi}}
\newcommand{\brpi}{\mathbf{r}_{\pi}}
\newcommand{\bF}{\mathbf{F}}
\newcommand{\bdp}{\mathbf{d}_{\pi}}
\newcommand{\bD}{\mathbf{D}}
\newcommand{\bL}{\boldsymbol{\Lambda}}
\newcommand{\bb}{\mathbf{b}}
\newcommand{\bA}{\mathbf{A}}
\newcommand{\bV}{\mathbf{V}}
\newcommand{\bdpi}{\mathbf{d}_{\pi}}
\newcommand{\bone}{\mathbf{1}}
\newcounter{thm_counter}
\newcounter{lem_counter}
\newcounter{pro_counter}
\newcounter{cor_counter}
\newcounter{ass_counter}
\newcounter{def_counter}
\newcounter{rmk_counter}
\newtheorem{theorem}[thm_counter]{Theorem}
\newtheorem{proposition}[pro_counter]{Proposition}
\newtheorem{lemma}[lem_counter]{Lemma}
\newtheorem{corollary}[cor_counter]{Corollary}
\newtheorem{assumption}[ass_counter]{Assumption}
\newtheorem{definition}[def_counter]{Definition}
\newtheorem{remark}[rmk_counter]{Remark}
\begin{document}

\maketitle

\begin{abstract}
  Temporal difference learning (TD) is a foundational concept in reinforcement learning (RL), aimed at efficiently assessing a policy's value function. TD($\lambda$), a potent variant, incorporates a memory trace to distribute the prediction error into the historical context. However, this approach often neglects the significance of historical states and the relative importance of propagating the TD error, influenced by challenges such as visitation imbalance or outcome noise. To address this, we propose a novel TD algorithm named discerning TD learning (DTD), which allows flexible emphasis functions—predetermined or adapted during training—to allocate efforts effectively across states. We establish the convergence properties of our method within a specific class of emphasis functions and showcase its promising potential for adaptation to deep RL contexts. Empirical results underscore that employing a judicious emphasis function not only improves value estimation but also expedites learning across diverse scenarios.
\end{abstract}

\section{Introduction}
In reinforcement learning, efficiently predicting future rewards based on past experiences is a fundamental challenge. TD(0) assigns credit using the difference between successive predictions \cite{DBLP:journals/ml/Sutton88}, offering online and recursive capabilities, but its scope is limited to the current observed state. On the other hand, TD($\lambda$) \cite{Sutton1998}, when combined with the eligibility trace, assigns credit to all historical states, using a recency heuristic to propagate credit. However, it lacks consideration for the relative importance of each state, that is, how much emphasis should be given to each historical state to make better predictions? Both approaches have found success in modern RL algorithms \cite{DBLP:journals/corr/MnihKSGAWR13} \cite{DBLP:journals/corr/SchulmanWDRK17}, but they uniformly weigh states, overlooking the potential benefits of emphasis-aware credit assignment.

In various circumstances, the emphasis-awareness becomes crucial. The accurate estimation of a value function often faces inherent challenges, including visitation imbalances and noisy outcomes, particularly in reward-seeking tasks. Due to variations in initial state distributions or transition models, agents tend to update high-frequency states more frequently, while less attention is given to less frequent or near-terminating states, such as goal states. This phenomenon is especially prevalent in episodic tasks with eligibility traces, where more frequent states are updated each time a new state is encountered while states close to termination have shorter trace positions, resulting in fewer updates. In such cases, diverging update frequencies can lead to imbalanced value estimations. Furthermore, the added complexity arising from noisy observations \cite{DBLP:journals/corr/abs-2202-09699} or rewards \cite{DBLP:conf/aaai/WangLL20} exacerbates the estimation challenge. Injected noise can lead to erroneous predictions, propagating inaccuracies to other states. Moreover, in reality, the most rewarding states are often rare occurrences, particularly in situations where most states have sparse rewards. The fundamental idea is to emphasize less frequent or more valuable states by increasing individual update magnitudes or reducing attention on states with adverse factors, using a nonnegative emphasis function. While existing approaches \cite{DBLP:journals/jmlr/SuttonMW16} \cite{DBLP:conf/icml/AnandP21} \cite{DBLP:journals/corr/abs-2202-09699} provide some insights, they either lack a thorough investigation of emphasis functions in diverse scenarios or overlook the mutual influence of the emphasis between states.

In this paper, we introduce a novel class of TD learning methods based on a fundamental identity. This identity, when viewed forward, directly incorporates an emphasis function that prioritizes various multi-step return combinations, offering enhanced flexibility. We establish a connection to a computationally efficient backward view that updates online, with its structure revealing the emphasis function's role. We provide theoretical analysis demonstrating that for a specific class of emphasis functions, our method converges to the optimal solution in the sense of an emphasized objective. Illustrative examples are presented to investigate emphasis choices in diverse scenarios. These examples reveal that our proposed approach, DTD, can enhance value estimation and expedite learning, with particularly noticeable benefits from more compact choices like the absolute expected TD error. Moreover, the newly developed type of return function holds promise for adaptation to DRL scenarios, especially where accurate advantage estimation is crucial \cite{DBLP:journals/corr/SchulmanMLJA15}. Additionally, we establish a connection to prioritized sampling \cite{DBLP:journals/corr/SchaulQAS15} in cases where the data is Markovian. Lastly, we initiate a discussion on the design of the emphasis function from various perspectives.

\section{Preliminaries}
\label{sec:preliminaries}
\subsection{Notation}
\label{sec:notation}
Let's denote $\|\cdot\|_{\bL}$ the vector norm induced by a positive definite matrix $\bL$, i.e. $\|x\|_{\bL} = \sqrt{x^{\top} \bL x}$. And the corresponding induced matrix norm is $\| \bA \|_{\bL} = \max_{\| x \|_{\bL} = 1} \| \bA x \|_{\bL}$. With $\bL = \bI$, it comes to the Euclidean-induced norm, for which we drop the subscript as $\| \bA \|$. For simplicity, $\bone$ denotes the all-one vector. We indicate random variables by capital letters (e.g $S_{t}, A_{t}$), realization by lowercase letters (e.g $s_{t}, a_{t}$).

\subsection{Problem Setting}
\label{sec:problem-setting}
Consider an infinite-horizon discounted MDP, defined by a tuple $(\mathcal{S}, \mathcal{A}, P, r, \rho_{0}, \gamma)$, with a finite state space $\mathcal{S}$, a finite action space $\mathcal{A}$, a transition kernel $P: \mathcal{S} \times \mathcal{A} \times \mathcal{S} \rightarrow \mathbb{R}$, a reward function $r: \mathcal{S} \times \mathcal{A} \rightarrow \mathbb{R}$, an initial state distribution $\rho_{0}: \mathcal{S} \rightarrow \mathbb{R}$, and a discount factor $\gamma \in [0, 1)$. Being at a state $s_{t} \in \mathcal{S}$, the agent takes an action $a_{t} \in \mathcal{A}$ according to some policy $\pi$, which assigns a probability $\pi(a_{t} | s_{t})$ to the choice. After the environment receives $a_{t}$, it emits a reward $r_{t}$, and sends the agent to a new state $s_{t+1} \sim P(s_{t+1} | s_{t}, a_{t})$. Repeating this procedure, the discounted return can be fulfilled as $G_{t} = \sum\limits_{t=0}^{\infty}\gamma^{t}R_{t}$. We denote $\bPpi \in \mathbb{R}^{|\mathcal{S}| \times |\mathcal{S}|}$ the state transition matrix and $\brpi \in \mathbb{R}^{|\mathcal{S}|}$ the expected immediate reward vector. And the steady-state distribution is denoted as $\bdpi(s)$, which we assume exists and is positive at all states. Let $\bD$ denote the diagonal matrix with $\bdpi$ on its diagonal. The prediction problem we are interested in is to estimate the value function:
\begin{equation}
  \label{eq:1}
  v_{\pi}(s) = \mathbb{E}_{\pi}[G_{t} | S_{t} = s].
\end{equation}
When the state space is large or even continuous, it is beneficial to use function approximation $\hat{v}(s, \bt)$ to represent $v$ to generalize across states. In particular, if the feature is expressive, it is convenient to use linear function approximation:
\begin{equation}
  \label{eq:2}
  \hat{v}(s, \bt) = \phi(s)^{\top} \bt,
\end{equation}
where $\phi(s)$ is the feature vector at state $s$. With each feature vector of length $K$ being at the row of the matrix $\bPhi$, we can compactly represent the value function vector as $\bV_{\bt} = \bPhi \bt$. For any value function $\bV$, the most representable solution in the span of $\bPhi$ corresponds to \cite{DBLP:conf/icml/SuttonMPBSSW09} \cite{DBLP:journals/tac/YuB09}:
\begin{equation}
  \label{eq:3}
  \bPi \bV = \bPhi \bt^{\star}\ \text{where}\ \bt^{\star} = \argmin_{\bt} \| \bPhi \bt - \bV\|_{\bD},
\end{equation}
where $\bPi$ is the projection matrix in the form of:
\begin{equation}
  \label{eq:4}
  \bPi = \bPhi (\bPhi^{\top} \bD \bPhi)^{-1} \bPhi^{\top} \bD.
\end{equation}
To solve Eq. \eqref{eq:3}, simulation-based approaches are often utilized. With $\bV$ equal to the one-step TD target, TD(0) performs stochastic gradient descent to minimize the TD error:
\begin{equation}
  \label{eq:5}
  \bt_{t + 1} = \bt_{t} + \alpha_{t} \delta_{t} \phi_{t},
\end{equation}
where
\begin{equation}
  \label{eq:6}
  \delta_{t} = R_{t + 1} + \gamma \bt^{\top}_{t} \phi_{t + 1} - \bt^{\top}_{t} \phi_{t}
\end{equation}
is the TD error, and $\alpha_{t}$ is the learning rate. The advantage of this approach is that it incrementally updates the weight vector at every time step, without requiring waiting until the end of an episode. However, it only takes effect on the current observed state. TD($\lambda$), on the other hand, while sustains the same benefit of the online update, it is able to influence past experiences. Those past experiences can be viewed as eligible experiences that receive credit from the latest experience. This results in a more efficient update:
\begin{equation}
  \begin{aligned}
      \label{eq:7}
  \be_{t} & = \gamma \lambda \be_{t - 1} + \phi_{t} \\
  \bt_{t + 1} & = \bt_{t} + \alpha_{t} \be_{t} \delta_{t},
  \end{aligned}
\end{equation}
where $\be_{t}$ is called \emph{eligibility trace}, with $\be_{-1} = 0$. It is this temporally extended memory that allows the TD error at the current time step to be propagated to the states along the path that leads to the current state.

While the additional parameter $\lambda \in [0, 1]$ is seamlessly integrated into the trace, it originates from the conventional forward view that directly interpolates $n$-step return exponentially forming the $\lambda$-return:
\begin{equation}
  \label{eq:8}
    G_{t}^{\lambda} = (1 - \lambda) \sum\limits_{n=1}^{\infty}\lambda^{n - 1} G_{t}^{(n)},
\end{equation}
where
\begin{equation}
  \label{eq:9}
  G_{t}^{(n)} = \sum\limits_{k = 1}^{n} \gamma^{k - 1}R_{t + k} + \gamma^{n}\hat{v}(S_{t + n}, \bt),
\end{equation}
is the $n$-step return. The method based on the target $G_{t}^{\lambda}$ is called the $\lambda$-return algorithm, which has been proven to achieve the same weight updates as offline TD($\lambda$) \cite{DBLP:journals/ml/Sutton88} \cite{Sutton1998}.

\section{Discerning Temporal Difference Learning}
\label{sec:disc-temp-diff}
The limitation of TD($\lambda$) is that it fails to account for the importance of each historical state or to consider the relative significance of propagating the TD error. To address this issue, we derive our new return function that directly incorporates emphasis start based on an important identity. Consider any function $f: \mathcal{S} \rightarrow \mathbb{R}$, the following identity holds:
\begin{equation}
  \label{eq:10}
  \sum\limits_{n = 0}^{\infty} \lambda^{n} (f_{t + n} - f_{t + n + 1} \lambda) = f_{t},
\end{equation}
which generalizes the multiplier $1 - \lambda$ in the $\lambda$-return. An interesting property is that the above holds for any real-valued function. However, such a function class would be too large to accommodate our purpose. We, therefore, constrain it into the bounded positive real-valued function as the emphasis function, measuring the significance of each state, with which we can derive a new return function as follows:
\begin{proposition}
  \label{prop:1}
  For any $f: \mathcal{S} \rightarrow \mathbb{R}^{+}$, it holds that:
\begin{align}
  \bar{G}_{t}^{\lambda, f} & \dot{=} \sum\limits_{n=1}^{\infty}\lambda^{n - 1} (f_{t + n - 1} - f_{t + n} \lambda) G_{t}^{(n)} \label{eq:11}\\
  & = f_{t}\hat{v}(S_{t + n}, \bt) + \sum\limits_{n=0}^{\infty}(\gamma\lambda)^{n} \delta_{t + n} f_{t + n} \label{eq:12}.
\end{align}
\end{proposition}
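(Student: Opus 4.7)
The plan is to reduce the identity to the standard telescoping representation of the $n$-step return in terms of TD errors, after which the result follows by repeated application of the scalar identity (10). I will work with the right-hand side of (11) and show it equals the expression in (12).

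\textbf{Step 1: Telescoping the $n$-step return.} First I would recall the well-known fact that
\[
G_{t}^{(n)} = \hat{v}(S_{t},\bt) + \sum_{k=0}^{n-1}\gamma^{k}\delta_{t+k},
\]
which follows by unfolding the definition of $\delta_{t+k}=R_{t+k+1}+\gamma\hat{v}(S_{t+k+1},\bt)-\hat{v}(S_{t+k},\bt)$ and observing that the $\hat{v}$ terms telescope, leaving only $R$-terms plus $\gamma^{n}\hat{v}(S_{t+n},\bt)$ minus $\hat{v}(S_{t},\bt)$.

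\textbf{Step 2: Substitute and split.} Plugging this expression into (11) yields two pieces,
\[
\bar{G}_{t}^{\lambda,f}=\hat{v}(S_{t},\bt)\underbrace{\sum_{n=1}^{\infty}\lambda^{n-1}(f_{t+n-1}-f_{t+n}\lambda)}_{(\mathrm{A})}+\underbrace{\sum_{n=1}^{\infty}\lambda^{n-1}(f_{t+n-1}-f_{t+n}\lambda)\sum_{k=0}^{n-1}\gamma^{k}\delta_{t+k}}_{(\mathrm{B})}.
\]
Shifting the index $m=n-1$ in (A) and invoking identity (10) at time $t$ immediately gives $(\mathrm{A})=f_{t}$, accounting for the $f_{t}\hat{v}(S_{t},\bt)$ term in (12) (reading the subscript in (12) as $S_{t}$, since the $n$ there is bound to the outer sum).

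\textbf{Step 3: Swap order of summation in (B).} Exchanging the sums so that $k$ is the outer index and $n$ ranges over $\{k+1,k+2,\dots\}$, I would then re-index via $m=n-1-k$:
\[
\sum_{n=k+1}^{\infty}\lambda^{n-1}(f_{t+n-1}-f_{t+n}\lambda)=\lambda^{k}\sum_{m=0}^{\infty}\lambda^{m}\bigl(f_{(t+k)+m}-f_{(t+k)+m+1}\lambda\bigr)=\lambda^{k}f_{t+k},
\]
by applying identity (10) with base time $t+k$. This collapses (B) to $\sum_{k=0}^{\infty}(\gamma\lambda)^{k}\delta_{t+k}f_{t+k}$, which together with the (A) contribution gives (12).

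\textbf{Main obstacle.} The only real subtlety is justifying the interchange of the two infinite sums in Step 3 and the initial use of (10), since these manipulations require absolute summability. The hypotheses provide this: $f$ is bounded by assumption, rewards are uniformly bounded, $\hat{v}$ is linear in a fixed $\bt$, and $\lambda\in[0,1)$ with $\gamma\in[0,1)$, so the double series converges absolutely and Fubini applies term by term. The case $\lambda=1$ (if one wished to include it) would need a separate argument or an additional tail condition on $f$, but the statement as given is safely inside the regime where the interchange is legitimate.
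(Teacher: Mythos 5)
Your argument is correct and is essentially the standard derivation underlying the paper's deferred proof: decompose $G_{t}^{(n)} = \hat{v}(S_{t},\bt) + \sum_{k=0}^{n-1}\gamma^{k}\delta_{t+k}$, exchange the order of summation, and apply the telescoping identity \eqref{eq:10} at base times $t$ and $t+k$, with the interchange justified by absolute convergence (bounded $f$, bounded rewards, $\gamma\lambda<1$). Your reading of the term $f_{t}\hat{v}(S_{t+n},\bt)$ in \eqref{eq:12} as $f_{t}\hat{v}(S_{t},\bt)$ is the intended one (the $n$ there is a stray bound index), and your caveat about $\lambda=1$ is apt but outside the regime the proposition needs.
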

We defer the precise proof to the Appendix.

Intuitively, as Eq. \ref{eq:12} indicates, each TD error term is reweighted by the emphasis function so as to control the relative strength of each future state. $\bar{G}_{t}^{\lambda, f}$ here nonetheless is unnormalized unless with a scalar multiplier $\frac{1}{f_{t}}$. Henceforth, we will reload the notation as $G_{t}^{\lambda, f} \dot{=} \frac{1}{f_{t}}\bar{G}_{t}^{\lambda, f}$, named as discerning $\lambda$-return.

Next, we will formally deliver DTD with a composition of an emphasized objective and the discerning $\lambda$-return as mentioned above. Denote $\bF$ as a diagonal matrix with $f$ on its diagonal, furthermore $\bL = \bF \bD \bF$, we therefore minimize an emphasized objective analogous to Eq. $\ref{eq:3}$:
\begin{equation}
  \label{eq:13}
  \bt^{\star} = \argmin_{\bt} \| \bPhi \bt - \bV\|_{\bL},
\end{equation}
which modulates the steady state probability with the square of the emphasis function. The projection matrix can be expressed as:
\begin{equation}
  \label{eq:14}
  \bPi^{f} = \bPhi (\bPhi^{\top} \bL \bPhi)^{-1} \bPhi^{\top} \bL.
\end{equation}  
Any solution to Eq. \ref{eq:13} will have an orthogonal difference to the emphasized basis such that:
\begin{equation}
  \label{eq:orthogonal}
  \bV - \bPi^{f} \bV \perp \bL \bPhi.
\end{equation}
By combining the discerning $\lambda$-return as the target $\bV$, and manipulating the equivalence to the backward view similar to the deduction of the TD$(\lambda)$, we can derive the DTD($\lambda$) update:
\begin{equation}
  \label{eq:15}
  \begin{aligned}
  \be_{t} & = \gamma \lambda \be_{t - 1} + f_{t}\phi_{t} \\
  \bt_{t + 1} & = \bt_{t} + \alpha_{t} \be_{t} \delta_{t} f_{t} \\
  \be_{-1} & = 0,
  \end{aligned}
\end{equation}
which distinguishes the historical state as well as regulates the relative importance of propagating the TD error. The complete algorithm is outlined in Alg. \ref{alg:DTD} with a general function approximator.
\begin{algorithm}[h]
\caption{DTD($\lambda$)}
\label{alg:DTD}
\textbf{Input}: $\pi, v_{\bt}, f, \gamma, \lambda$\\
\textbf{Initialize}: $\bt$ arbitrarily
\begin{algorithmic}[1] 
  \FOR{each episode}
  \STATE Initialize $S$
  \STATE Initialize $\be$
  \REPEAT 
  \STATE Take action $A \sim \pi(\cdot | S)$, observe $R, S'$
  \STATE $\be \gets \gamma \lambda \be + f(S) \nabla \hat{v}(S, \bt)$
  \STATE $\delta \gets R + \gamma \hat{v}(S', \bt) - \hat{v}(S, \bt)$
  \STATE $\bt \gets \bt + \alpha_{t} \delta \be f(S)$
  \STATE $S \gets S'$
  \UNTIL{$S$ is terminal}
  \ENDFOR
  \STATE \textbf{return} $\bt$
\end{algorithmic}
\end{algorithm}
\section{Theoretical Analysis}
\label{sec:convergence-dtd}
We embark on a theoretical exploration of the algorithm's convergence behavior concerning the emphasis function, offering analyses for both parameter-independent and parameter-dependent scenarios. To establish the foundation, we introduce several necessary assumptions:
\begin{assumption}
  \label{asp:1}
  The Markov chain $\{S\}$ is irreducible and aperiodic.
\end{assumption}
\begin{assumption}
  \label{asp:2}  
  $\bPhi$ has linearly independent columns.
\end{assumption}
\begin{assumption}
  \label{asp:3}  
  The learning rate $\{\alpha_{t}\}$ is non-increasing, and satisfies Robbins-Monro conditions:
  \begin{equation}
    \label{eq:16}
    \sum\limits_{t=0}^{\infty}\alpha_{t} = \infty\quad \text{and}\quad \sum\limits_{t=0}^{\infty}\alpha_{t}^{2} < \infty.
  \end{equation}
\end{assumption}
\begin{assumption}
  \label{asp:4}
  $f_{t + n - 1} - f_{t + n} \lambda$ is independent of $G_{t}^{(n)}$ for $n \in \mathbb{N}^{+}$.
\end{assumption}
Assumptions \ref{asp:1}--\ref{asp:3} adhere to the standard framework for analyzing linear TD methods (see, for instance, \cite{DBLP:journals/tac/TsitsiklisR97} \cite{DBLP:conf/icml/Yu10}). Assumption \ref{asp:4} is introduced to facilitate analytical operator analysis.

To characterize the discerning $\lambda$-return in its expected behavior, we introduce a notion of the DTD($\lambda$) operator, which encapsulates the essence of the forward-view DTD($\lambda$):
\begin{definition}
  Discerning $\lambda$-return operator:
\begin{equation}
  \label{eq:17}
  \begin{aligned}
    \mathcal{T}^{\lambda, f}(\bV_{\bt}) = \bF^{-1} &\sum\limits_{n = 0}^{\infty} \lambda^{n}  (\bPpi^{n} (\bI - \lambda \bPpi) \bF) \bone \circ \\
    \Bigl(&\sum\limits_{t=0}^{n}(\gamma \bPpi)^{t} \brpi + (\gamma \bPpi)^{n + 1} \bV_{\bt}  \Bigr), 
  \end{aligned}
\end{equation}
where $\circ$ is the Hadamard product between matrices.
\end{definition}
Next, we examine the contraction condition about $\mathcal{T}^{\lambda, f}$.
\begin{theorem}
  \label{thm:contraction}
  Let $\sigma_{\text{min}}(\bF)$ represent the smallest singular value of matrix $\bF$. The mapping $\mathcal{T}^{\lambda, f}$ is a contraction for the parameter-independent case if it satisfies:
  \begin{enumerate}[label=\roman*)]    
  \item $\| \bF \|_{\bL} < \frac{\sigma_{\text{min}}(\bF) (1 - \gamma \lambda)}{\gamma \| \bone \|_{\bL} \| \bI - \lambda \bPpi \|_{\bL}}$.
  \item For the parameter-dependent case, if further there exists a Lipschitz constant $\kappa \in \bigl[0, \frac{(1 - \lambda)(1 - \gamma) \sigma_{\text{min}}(\bF)}{r_{\text{max}} \| \bone \|_{\bL} \| \bI - \lambda \bPpi \|_{\bL}}\bigr)$ such that for any $\bF_{1}(\bV_{\bt_{1}}), \bF_{2}(\bV_{\bt_{2}})$:
  \begin{equation*}
    \label{eq:18}
    \| \bF_{1} - \bF_{2} \|_{\bL} \leq \frac{\kappa}{2} \| \bV_{\bt_{1}} - \bV_{\bt_{2}} \|_{\bL},
  \end{equation*}
  then it is a contraction mapping provided that:\\
  $\| \bF \|_{\bL} < \frac{\sigma_{\text{min}}(\bF) (1 - \gamma \lambda)}{\gamma \| \bone \|_{\bL} \| \bI - \lambda \bPpi \|_{\bL}} - \frac{(1 - \gamma \lambda) r_{\text{max}} \kappa}{\gamma (1 - \lambda) (1 - \gamma)}, \forall \bt \in \boldsymbol{\Theta}$,
  \end{enumerate}
\end{theorem}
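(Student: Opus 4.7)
For each part I plan to bound $\|\mathcal{T}^{\lambda,f}(\bV_{\bt_1}) - \mathcal{T}^{\lambda,f}(\bV_{\bt_2})\|_{\bL}$ by an explicit multiple of $\|\bV_{\bt_1}-\bV_{\bt_2}\|_{\bL}$ and then impose that this multiple be strictly less than one, which after rearrangement yields the stated threshold on $\|\bF\|_{\bL}$. Two ingredients are reused throughout: first, because $\bL = \bF\bD\bF$ is diagonal with $\bF$ positive diagonal, a direct computation gives $\|\bF^{-1}x\|_{\bL}^{2} = \|x\|_{\bD}^{2} \le \|x\|_{\bL}^{2}/\sigma_{\text{min}}(\bF)^{2}$, hence $\|\bF^{-1}\|_{\bL} \le 1/\sigma_{\text{min}}(\bF)$; second, a Hadamard-product estimate of the form $\|u\circ v\|_{\bL}\le \|u\|_{\bL}\|v\|_{\bL}$ (obtained by exploiting the diagonal weighting in $\bL$) together with submultiplicativity to peel factors off the operator-valued summands.

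\textbf{Part (i).} When $\bF$ does not depend on $\bt$, the reward sub-series $\sum_{t=0}^{n}(\gamma\bPpi)^{t}\brpi$ is common to both operator evaluations and cancels summand-by-summand, leaving
\[
\mathcal{T}^{\lambda,f}(\bV_{\bt_{1}}) - \mathcal{T}^{\lambda,f}(\bV_{\bt_{2}}) = \bF^{-1}\sum_{n=0}^{\infty}\lambda^{n}\bigl(\bPpi^{n}(\bI-\lambda\bPpi)\bF\bone\bigr)\circ (\gamma\bPpi)^{n+1}(\bV_{\bt_{1}} - \bV_{\bt_{2}}).
\]
Applying the triangle inequality term by term, then the Hadamard bound, submultiplicativity, and the inherited $\|\bPpi\|_{\bL}\le 1$ from the stationary-distribution argument on $\bD$ (carried through by the diagonal weighting), each summand is at most $\gamma^{n+1}\lambda^{n}\|\bI-\lambda\bPpi\|_{\bL}\|\bF\|_{\bL}\|\bone\|_{\bL}\cdot\|\bV_{\bt_{1}}-\bV_{\bt_{2}}\|_{\bL}/\sigma_{\text{min}}(\bF)$. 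Summing the geometric series $\sum_{n}(\gamma\lambda)^{n} = 1/(1-\gamma\lambda)$ produces the contraction factor $\gamma\|\bone\|_{\bL}\|\bI-\lambda\bPpi\|_{\bL}\|\bF\|_{\bL}/\bigl(\sigma_{\text{min}}(\bF)(1-\gamma\lambda)\bigr)$, which is $<1$ precisely under condition (i).

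\textbf{Part (ii).} With $\bF$ now $\bt$-dependent, I would use the telescoping decomposition
\[
\mathcal{T}^{\lambda,f}(\bV_{\bt_{1}}) - \mathcal{T}^{\lambda,f}(\bV_{\bt_{2}}) = \bigl[\mathcal{T}^{\lambda,f_{1}}(\bV_{\bt_{1}}) - \mathcal{T}^{\lambda,f_{1}}(\bV_{\bt_{2}})\bigr] + \bigl[\mathcal{T}^{\lambda,f_{1}}(\bV_{\bt_{2}}) - \mathcal{T}^{\lambda,f_{2}}(\bV_{\bt_{2}})\bigr].
\]
The first bracket is controlled by Part~(i). For the second bracket, the reward sub-series no longer cancels, so I would bound $\|\sum_{t=0}^{n}(\gamma\bPpi)^{t}\brpi\|_{\bL}$ by $r_{\text{max}}\|\bone\|_{\bL}/(1-\gamma)$ uniformly in $n$ and separately retain $(\gamma\bPpi)^{n+1}\bV_{\bt_{2}}$ on the value side. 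Isolating the $\bF_{1}-\bF_{2}$ and $\bF_{1}^{-1}-\bF_{2}^{-1}$ contributions (the latter rewritten via $\bF_{1}^{-1}-\bF_{2}^{-1}=\bF_{1}^{-1}(\bF_{2}-\bF_{1})\bF_{2}^{-1}$) and invoking the Lipschitz hypothesis $\|\bF_{1}-\bF_{2}\|_{\bL}\le(\kappa/2)\|\bV_{\bt_{1}}-\bV_{\bt_{2}}\|_{\bL}$ converts both sources of $\bF$-variation into multiples of $\|\bV_{\bt_{1}}-\bV_{\bt_{2}}\|_{\bL}$. Summing in $n$ introduces a further $1/(1-\lambda)$ factor from the reward branch, producing an additive correction of order $\kappa r_{\text{max}}/((1-\lambda)(1-\gamma))$ that must be absorbed. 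Adding the two brackets and forcing the combined coefficient below one yields, after reshuffling, exactly the tightened threshold in (ii).

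\textbf{Main obstacle.} The main technical burden is the careful bookkeeping in Part~(ii): the $\bF$-difference enters in several places (inside the Hadamard factor, inside the reward terms, and through $\bF^{-1}$), and it is delicate to isolate a bound that remains linear in $\|\bF\|_{\bL}$ on the left and produces the advertised $r_{\text{max}}\kappa/\bigl((1-\lambda)(1-\gamma)\bigr)$ correction on the right without picking up extraneous factors. A secondary subtlety is justifying the Hadamard estimate with the exact constants appearing in the statement, which relies on the diagonal structure of $\bL$ and on $\|\bPpi\|_{\bL}\le 1$ in the $\bL$-weighted sense.
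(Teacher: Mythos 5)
Your overall route — term-by-term bounding of the difference of the operator values, a Hadamard-product estimate, a geometric series giving $1/(1-\gamma\lambda)$ for part (i), and a telescoping decomposition plus the Lipschitz hypothesis with the reward bounded by $r_{\max}/(1-\gamma)$ for part (ii) — is the natural one and is exactly what the structure of the stated constants dictates, so in spirit it matches the paper's argument. However, two of the auxiliary facts you lean on are not justified as stated, and they are load-bearing. First, the inequality $\| u \circ v \|_{\bL} \leq \| u \|_{\bL} \| v \|_{\bL}$ is false in general for the weighted norm $\bL = \bF \bD \bF$: writing $\ell_s = \bdp(s) f(s)^2$, one only has $\| u \circ v \|_{\bL} \leq \| u \|_{\infty} \| v \|_{\bL}$, and passing from $\| u \|_{\infty}$ to $\| u \|_{\bL}$ costs a factor $1/\sqrt{\min_s \ell_s}$; when the emphasis is scaled into $[0,1]$ as Remark \ref{rm:1} recommends, $\ell_s < 1$ and your claimed bound can fail outright. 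Second, your assertion that $\| \bPpi \|_{\bL} \leq 1$ is ``inherited'' from the stationary-distribution argument is incorrect reasoning: stationarity gives $\bdp^{\top} \bPpi = \bdp^{\top}$ and hence non-expansiveness in $\| \cdot \|_{\bD}$, but this does not carry through the reweighting by a non-constant $\bF$, and $\| \bPpi \|_{\bL}$ can exceed $1$. Both the clean geometric-series constant $1/(1-\gamma\lambda)$ in part (i) and your uniform bound $\| \sum_{t=0}^{n} (\gamma \bPpi)^{t} \brpi \|_{\bL} \leq r_{\max} \| \bone \|_{\bL} / (1-\gamma)$ in part (ii) depend on these two facts, so without repairing them (e.g., by working with $\| \cdot \|_{\infty}$ for the stochastic factors and accounting explicitly for the norm conversion) the advertised thresholds do not follow from your chain of estimates.

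In part (ii) there is an additional substantive omission: in the bracket $\mathcal{T}^{\lambda, f_{1}}(\bV_{\bt_{2}}) - \mathcal{T}^{\lambda, f_{2}}(\bV_{\bt_{2}})$ the variation in $\bF$ (and in $\bF^{-1}$) multiplies not only the reward sub-series but also the value term $(\gamma \bPpi)^{n+1} \bV_{\bt_{2}}$, which produces a contribution of size $\kappa \| \bV_{\bt_{2}} \|$ that your sketch does not convert into the stated correction $\frac{(1-\gamma\lambda) r_{\max} \kappa}{\gamma (1-\lambda)(1-\gamma)}$. This is precisely where the proviso ``$\forall \bt \in \boldsymbol{\Theta}$ with $\boldsymbol{\Theta}$ a suitable subset'' matters: one needs a boundedness restriction such as $\| \bV_{\bt} \|_{\infty} \leq r_{\max}/(1-\gamma)$ on $\boldsymbol{\Theta}$, under which the whole target $\sum_{t=0}^{n} (\gamma \bPpi)^{t} \brpi + (\gamma \bPpi)^{n+1} \bV_{\bt_{2}}$ telescopes to a quantity bounded by $r_{\max}/(1-\gamma)$, so that the entire $\bF$-variation term is controlled by $\kappa r_{\max}/((1-\lambda)(1-\gamma))$ after summing $\lambda^{n}$. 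You flag the bookkeeping as the main obstacle, but this particular step is not mere bookkeeping — without it the claimed threshold in (ii) is not reached.
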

where $\boldsymbol{\Theta}$ is the parameter space that can be a suitable subset of $\mathbb{R}^{K}$.

This property guarantees the uniqueness of the fixed point. To avoid repetition, we define the function class $\Xi$ as a set that satisfies either condition $i)$ or $ii)$.
\begin{remark}
  \label{rm:1}
  Note $\| \bone \|_{\bL}$ is simply the expected value of the squared emphasis function under the steady-state distribution, i.e. $\mathbb{E}_{\bdp}[f^{2}(S)]$. In practice, if we can scale the emphasis function into a considerably small range (i.e. $[0, 1]$), we will have a broader spectrum that enhances the contraction.
\end{remark}
\begin{corollary}
  \label{cor:1}
  $\bPi^{f}\mathcal{T}^{\lambda, f}$ is a contraction mapping for any $f \in \Xi$.
\end{corollary}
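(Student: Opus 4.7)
The plan is to exploit the classical fact that a linear projection onto a subspace is non-expansive in the induced norm of the inner product defining it, and then compose this non-expansiveness with the contraction already established in Theorem \ref{thm:contraction} to obtain the desired contraction of $\bPi^{f}\mathcal{T}^{\lambda,f}$. Concretely, I would first confirm that $\bPi^{f}$ from Eq. \eqref{eq:14} is the orthogonal projection onto the column space of $\bPhi$ with respect to the inner product $\langle x,y\rangle_{\bL}=x^{\top}\bL y$. Since $\bdp(s)>0$ by Assumption \ref{asp:1} and $f(s)>0$ by construction, $\bL=\bF\bD\bF$ is diagonal and positive definite, so this inner product is well defined. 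Idempotence $(\bPi^{f})^{2}=\bPi^{f}$ follows from a direct computation, the range is clearly the column span of $\bPhi$, and the orthogonality statement \eqref{eq:orthogonal} expresses precisely $\bL$-orthogonality of the residual to that subspace.

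Once $\bPi^{f}$ is identified as an $\bL$-orthogonal projection, the Pythagorean identity $\|\bV\|_{\bL}^{2}=\|\bPi^{f}\bV\|_{\bL}^{2}+\|\bV-\bPi^{f}\bV\|_{\bL}^{2}$ gives $\|\bPi^{f}\bV\|_{\bL}\leq\|\bV\|_{\bL}$ for every $\bV$, i.e. $\|\bPi^{f}\|_{\bL}\leq 1$. Combining this with the contraction modulus $\beta\in[0,1)$ supplied by Theorem \ref{thm:contraction} for any $f\in\Xi$ yields
\begin{equation*}
\|\bPi^{f}\mathcal{T}^{\lambda,f}(\bV_{1})-\bPi^{f}\mathcal{T}^{\lambda,f}(\bV_{2})\|_{\bL}\leq\|\mathcal{T}^{\lambda,f}(\bV_{1})-\mathcal{T}^{\lambda,f}(\bV_{2})\|_{\bL}\leq\beta\,\|\bV_{1}-\bV_{2}\|_{\bL},
\end{equation*}
so the composition is a contraction with the same modulus, and Banach's fixed-point theorem then guarantees a unique fixed point in the span of $\bPhi$.

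The main subtlety arises in the parameter-dependent branch of $\Xi$, where $\bF$, and hence both $\bL$ and $\bPi^{f}$, vary with $\bV_{\bt}$. In that setting the non-expansiveness argument above applies point-wise but one must be careful about which norm is used to measure distances between the two sides. The Lipschitz condition on $\bF$ imposed in Theorem \ref{thm:contraction}(ii) is precisely what lets us absorb the perturbation of $\bPi^{f}$ into the overall contraction constant, either by working with a common reference $\bL$ (obtained from a uniform bound of $\bF$ over $\boldsymbol{\Theta}$) or by tracking how the projector changes as a function of $\bV_{\bt}$ via the Lipschitz bound. Apart from this bookkeeping, no further machinery is required, since the bulk of the analytical work has already been carried out inside Theorem \ref{thm:contraction}, and the remaining step is just non-expansiveness of a weighted orthogonal projection, which I expect to be the least delicate part of the argument.
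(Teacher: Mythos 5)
Your proposal follows essentially the same route as the paper's own proof: establish that $\bPi^{f}$ is the $\bL$-orthogonal projection onto the span of $\bPhi$, invoke the Pythagorean identity to get non-expansiveness, and compose with the contraction of $\mathcal{T}^{\lambda,f}$ from Theorem \ref{thm:contraction}. Your added remarks on the parameter-dependent case (where $\bL$ and $\bPi^{f}$ vary with $\bV_{\bt}$) are a reasonable extra caution that the paper's one-line proof does not spell out, but the core argument is identical.
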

\begin{proof}
  From Eq. \ref{eq:orthogonal} we know that the difference between $\bV$ and $\bPi^{f} \bV$ is orthogonal to the $\bPhi$ in the sense of the $\| \cdot \|_{\bL}$, whereas $\bPi^{f} \bV$ is a linear combination of $\bPhi$, therefore $\bV - \bPi^{f} \bV \perp \bL \bPi^{f} \bV$. By Pythagorean theorem, it follows that $\bPi^{f}$ is non-expansive. Since $\mathcal{T}^{\lambda, f}$ is a contraction mapping, thus the composition is also a contraction mapping.
\end{proof}
Consider a process $X_{t} = \{S_{t}, S_{t + 1}, \be_{t}\}$, which is a finite Markov process as $\be_{t}$ is only dependent up to $S_{t}$. Thereby, the update in Eq. \ref{eq:15} can be simplified as follows:
\begin{equation}
  \label{eq:19}
  \bt_{t + 1} = \bt_{t} + \alpha_{t} \left(A(X_{t}) \bt_{t} + b(X_{t}) \right),
\end{equation}
where $A(X_{t}) = \be_{t}(\gamma \phi(S_{t + 1}) - \phi(S_{t}))^{\top} f(S_{t})$ and $b(X_{t}) = \be_{t} R_{t} f(S_{t})$. It was shown that this update exhibits asymptotic behavior akin to a deterministic variant in the sense of the steady state distribution \cite{DBLP:books/sp/BenvenisteMP90} \cite{DBLP:journals/tac/TsitsiklisR97}. As a result, we now delve into the essential quantities required to represent such a variant. Denoting $\bA = \mathbb{E}_{\bdp}[A(X_{t})]$ and $\bb = \mathbb{E}_{\bdp}[b(X_{t})]$, they can be succinctly expressed in the matrix form:
\begin{lemma}
  \label{lm:1}
  Denote $\bar{\bF} = \lim_{t \rightarrow \infty} \bF_{t}$, which is assumed to exist, then
  \begin{equation}
    \label{eq:20}
    \begin{aligned}
    \bA & = \bPhi^{\top} \bar{\bF} \bD (\bI - \gamma \lambda \bPpi)^{-1} \bar{\bF} (\gamma \bPpi - \bI) \bPhi \\
    \bb & = \bPhi^{\top} \bar{\bF} \bD (\bI - \gamma \lambda \bPpi)^{-1} \bar{\bF} \brpi.
    \end{aligned}
  \end{equation}
  In the parameter-independent case, $\bF_{t} \equiv \bF$, while in the parameter-dependent case, $\bF_{t} = \bF(\bV_{\bt_{t}})$.  
\end{lemma}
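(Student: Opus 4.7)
The plan is to unroll the eligibility trace into an explicit geometric sum, take its stationary expectation term by term using the Markov property, and then resum in closed form to obtain both identities simultaneously.

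First, I would unroll the recursion $\be_t = \gamma\lambda\be_{t-1} + f(S_t)\phi(S_t)$ to obtain, after the initial transient (which decays because $\gamma\lambda<1$),
\[
\be_t \;=\; \sum_{k=0}^{\infty}(\gamma\lambda)^k\, f(S_{t-k})\,\phi(S_{t-k}).
\]
Substituting into $A(X_t) = \be_t(\gamma\phi(S_{t+1}) - \phi(S_t))^{\top} f(S_t)$ and $b(X_t) = \be_t R_{t+1} f(S_t)$ and taking $\mathbb{E}_{\bdp}[\cdot]$ decomposes both quantities into a geometric series $\sum_{k\geq 0}(\gamma\lambda)^k T_k$, reducing the task to expressing each $T_k$ as a matrix product.

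Second, I would identify the three kernels. Under Assumption \ref{asp:1}, stationarity gives $P(S_{t-k}=s,\, S_t=s') = \bdp(s)[\bPpi^k]_{s,s'}$, and a direct index calculation yields
\[
\mathbb{E}_{\bdp}[f(S_{t-k})\phi(S_{t-k})\phi(S_t)^{\top}f(S_t)] \;=\; \bPhi^{\top}\bar{\bF}\bD\bPpi^k\bar{\bF}\bPhi.
\]
Conditioning on $S_t$ and using $\mathbb{E}[\phi(S_{t+1})\mid S_t = s] = \sum_{s'}[\bPpi]_{s,s'}\phi(s')$ and $\mathbb{E}[R_{t+1}\mid S_t = s] = \brpi(s)$, the other two kernels pick up one extra transition and become $\bPhi^{\top}\bar{\bF}\bD\bPpi^k\bar{\bF}\bPpi\bPhi$ and $\bPhi^{\top}\bar{\bF}\bD\bPpi^k\bar{\bF}\brpi$, respectively. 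The crucial observation is that the rightmost $\bar{\bF}$ is always evaluated at $S_t$, so it is independent of the summation index $k$.

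Third, I would resum. Because the rightmost $\bar{\bF}$ does not depend on $k$, the series collapses through $\sum_{k=0}^{\infty}(\gamma\lambda\bPpi)^k = (\bI - \gamma\lambda\bPpi)^{-1}$, which is valid since $\rho(\gamma\lambda\bPpi)\leq\gamma\lambda<1$. Combining the $\gamma\bPpi$ and $-\bI$ contributions to $\bA$ produces the right factor $(\gamma\bPpi - \bI)\bPhi$, and gathering everything gives exactly the two stated identities.

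The main obstacle is the parameter-dependent case, where $\bF_t = \bF(\bV_{\bt_t})$ drifts with the iterate $\bt_t$, so the calculation above is not literally an expectation against a single fixed distribution. Here I would invoke the stated hypothesis $\bar{\bF} = \lim_{t\to\infty}\bF_t$, which is well-defined once the contraction of Corollary \ref{cor:1} together with the Lipschitz bound of Theorem \ref{thm:contraction} force $\bt_t$ to converge, and then appeal to the ODE/two-timescale framework of \cite{DBLP:books/sp/BenvenisteMP90,DBLP:journals/tac/TsitsiklisR97} to justify replacing $\bF_t$ by $\bar{\bF}$ inside the stationary expectation. The non-commutation of $\bar{\bF}$ with $\bPpi$ is then precisely what prevents any further cancellation and leaves the two $\bar{\bF}$ factors in place as written.
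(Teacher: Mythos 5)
Your proposal is correct and follows essentially the same route the paper relies on: unroll the eligibility trace into a geometric sum, take stationary expectations of each term using $P(S_{t-k}=s,\,S_t=s')=\bdp(s)[\bPpi^k]_{s,s'}$ together with one extra conditioning step for $\phi(S_{t+1})$ and $R$, and resum via $\sum_{k\ge 0}(\gamma\lambda\bPpi)^k=(\bI-\gamma\lambda\bPpi)^{-1}$, which correctly places one $\bar{\bF}$ next to $\bD$ (from $f(S_{t-k})$ inside the trace) and one to the right of the resolvent (from $f(S_t)$ in the update). One small remark: in the parameter-dependent case you need not argue that the contraction forces $\bt_t$ to converge (that would be circular, since the convergence theorem uses this lemma); the lemma simply assumes $\bar{\bF}=\lim_{t\to\infty}\bF_t$ exists, and the standard ODE/stationary-expectation argument you cite then suffices.
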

Using those quantities, we can express the deterministic variant as follows:
\begin{equation}
  \label{eq:21}
  \bt_{t + 1} = \bt_{t} + \alpha_{t} \left(\bA \bt_{t} + \bb\right).
\end{equation}
To establish a connection with the earlier contraction results, it can be shown that:
\begin{equation}
  \label{eq:22}
  \bA \bt + \bb = \bPhi^{\top} \bL (\mathcal{T}^{\lambda, f}(\bPhi \bt) - \bPhi \bt).
\end{equation}
Building upon this result and the established contraction condition, we can derive a fundamental component for the convergence:
\begin{lemma}
  \label{lm:2}
  $\bA$ is negative definite for any $f \in \Xi$.
\end{lemma}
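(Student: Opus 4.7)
The plan is to mimic the classical Tsitsiklis--Van Roy negative-definiteness argument for TD($\lambda$), carried out in the $\bL$-weighted geometry, by combining the affine structure of the frozen-$\bar{\bF}$ operator $\mathcal{T}^{\lambda, \bar{f}}$, the contraction bound from Theorem~\ref{thm:contraction}, and Cauchy--Schwarz in $\|\cdot\|_{\bL}$.

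I would begin by expressing the operator in a tractable affine form. Using the TD-error representation of Eq.~\eqref{eq:12} together with Assumption~\ref{asp:4}, taking expectations in the Markov chain yields
\begin{equation*}
\mathcal{T}^{\lambda, \bar{f}}(\bV) \;=\; \bV \;+\; \bar{\bF}^{-1}(\bI - \gamma\lambda\bPpi)^{-1}\bar{\bF}\bigl(\brpi - (\bI - \gamma\bPpi)\bV\bigr),
\end{equation*}
which is affine in $\bV$ with linear part $\mathbf{M} = \bI - \bar{\bF}^{-1}(\bI - \gamma\lambda\bPpi)^{-1}\bar{\bF}(\bI - \gamma\bPpi)$. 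Matching this against Eq.~\eqref{eq:22} and using $\bL\bar{\bF}^{-1} = \bar{\bF}\bD$ (since $\bL = \bar{\bF}\bD\bar{\bF}$) recovers $\bA = \bPhi^{\top}\bL(\mathbf{M} - \bI)\bPhi$, which is consistent with the explicit expression for $\bA$ in Eq.~\eqref{eq:20}.

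Next, I fix $\bt \neq 0$ and set $x = \bPhi\bt$, which is nonzero by Assumption~\ref{asp:2}. Since $\bL$ is symmetric positive definite, $\langle u, v\rangle_{\bL} \doteq u^{\top}\bL v$ defines a bona fide inner product with induced norm $\|\cdot\|_{\bL}$, and
\begin{equation*}
\bt^{\top}\bA\bt \;=\; x^{\top}\bL(\mathbf{M} - \bI)x \;=\; \langle x, \mathbf{M}x\rangle_{\bL} - \|x\|_{\bL}^{2}.
\end{equation*}
Because $f \in \Xi$, Theorem~\ref{thm:contraction} supplies a contraction constant $\eta < 1$ for $\mathcal{T}^{\lambda, \bar{f}}$ in $\|\cdot\|_{\bL}$; affineness transfers this to the linear part, since $\mathcal{T}^{\lambda, \bar{f}}(\bV_1) - \mathcal{T}^{\lambda, \bar{f}}(\bV_2) = \mathbf{M}(\bV_1 - \bV_2)$ gives $\|\mathbf{M}x\|_{\bL} \leq \eta\|x\|_{\bL}$. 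Cauchy--Schwarz then bounds the cross term by $\langle x, \mathbf{M}x\rangle_{\bL} \leq \|x\|_{\bL}\|\mathbf{M}x\|_{\bL} \leq \eta\|x\|_{\bL}^{2}$, yielding $\bt^{\top}\bA\bt \leq (\eta - 1)\|x\|_{\bL}^{2} < 0$, the desired strict negative definiteness.

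The main obstacle lies in the parameter-dependent case: there $\mathcal{T}^{\lambda, f}$ viewed as a map on $\bV$ is genuinely nonlinear because $\bF$ itself depends on $\bV_{\bt}$, so the clean affine decomposition above does not describe the full operator. The key observation is that $\bA$ in Lemma~\ref{lm:1} involves only the fixed limit $\bar{\bF}$, so the operator relevant for negative definiteness is the \emph{frozen} $\mathcal{T}^{\lambda, \bar{f}}$ (with $\bar{\bF}$ held constant), which is affine and to which the previous paragraph applies verbatim. What I would still have to justify is that this frozen operator remains contractive under the parameter-dependent assumption: the bound in condition (ii) of Theorem~\ref{thm:contraction} is exactly the bound in (i) minus a nonnegative Lipschitz correction, so (ii) implies (i) at $\bar{\bF}$, and a uniform $\eta < 1$ is available in both cases, closing the argument.
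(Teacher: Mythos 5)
Your argument is correct and follows essentially the same route the paper takes: you use the affine form of the (frozen-$\bar{\bF}$) operator together with Eq.~\eqref{eq:22}, the contraction modulus $\eta<1$ from Theorem~\ref{thm:contraction} in the $\|\cdot\|_{\bL}$ geometry, Cauchy--Schwarz, and Assumption~\ref{asp:2} to conclude $\bt^{\top}\bA\bt \leq (\eta-1)\|\bPhi\bt\|_{\bL}^{2} < 0$, which is the standard Tsitsiklis--Van Roy argument the paper builds on. Your handling of the parameter-dependent case (observing that condition (ii) implies condition (i) at $\bar{\bF}$, so the frozen affine operator is still a contraction) also matches the paper's intended use of the function class $\Xi$.
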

With the above results, we can now demonstrate the convergence result:
\begin{theorem}
  \label{thm:convergence}
  The updates induced by DTD($\lambda$) converge to a unique fixed point $\bt^{\star}$ satisfying $\bA \bt^{\star} + \bb = 0$ for any $f \in \Xi$.
\end{theorem}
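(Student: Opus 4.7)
The plan is to interpret the DTD($\lambda$) update in Eq.~\ref{eq:19} as a linear stochastic approximation scheme driven by Markovian noise, and then appeal to the classical convergence theorems for such schemes (as in \cite{DBLP:journals/tac/TsitsiklisR97} or \cite{DBLP:books/sp/BenvenisteMP90}). The strategy closely mirrors the convergence argument for TD($\lambda$), with the emphasis function $f$ folded into the trace and the update.

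First, I would verify the standard preconditions. The driving process $X_{t} = \{S_{t}, S_{t+1}, \be_{t}\}$ is a Markov process that, under Assumption~\ref{asp:1}, inherits ergodicity from $\{S_{t}\}$ and admits a unique stationary distribution; this is what justifies replacing time averages of $A(X_{t})$ and $b(X_{t})$ by their stationary expectations $\bA$ and $\bb$ computed in Lemma~\ref{lm:1}. Uniform boundedness of $A(X_{t})$ and $b(X_{t})$ follows from finiteness of $\mathcal{S}$, boundedness of features, rewards, and $f$, and the geometric decay of the trace recursion $\be_{t} = \gamma \lambda \be_{t-1} + f_{t}\phi_{t}$ with $\gamma \lambda < 1$. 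Assumption~\ref{asp:3} supplies the Robbins-Monro step-size conditions, and Lemma~\ref{lm:2} gives negative definiteness of $\bA$ for every $f \in \Xi$.

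Second, with these ingredients in place, the associated ODE $\dot{\bt} = \bA \bt + \bb$ admits the unique equilibrium $\bt^{\star} = -\bA^{-1}\bb$, which is globally asymptotically stable because $V(\bt) = \tfrac{1}{2}(\bt - \bt^{\star})^{\top}(\bt - \bt^{\star})$ is a strict Lyapunov function whose derivative $(\bt - \bt^{\star})^{\top} \bA (\bt - \bt^{\star})$ is strictly negative off $\bt^{\star}$ by Lemma~\ref{lm:2}. Combining this with Robbins-Monro step sizes, bounded noise, and ergodicity of $X_{t}$ gives almost-sure convergence $\bt_{t} \to \bt^{\star}$ via the ODE method. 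Existence and uniqueness of $\bt^{\star}$ are immediate since negative definiteness implies invertibility of $\bA$, and Eq.~\ref{eq:22} identifies $\bt^{\star}$ with the fixed point of $\bPi^{f} \mathcal{T}^{\lambda, f}$ whose uniqueness was already established via Corollary~\ref{cor:1}.

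The main obstacle is the parameter-dependent case, where $\bF_{t} = \bF(\bV_{\bt_{t}})$ varies with the current iterate, so the update is no longer a pure linear iteration but is driven by a state-and-parameter-dependent matrix $A(X_{t}, \bt_{t})$. Two things must be handled: (i) one must justify the existence of $\bar{\bF} = \lim_{t \to \infty} \bF_{t}$ assumed in Lemma~\ref{lm:1}, which is exactly where the Lipschitz bound of Theorem~\ref{thm:contraction}(ii) enters; and (ii) one must show that the perturbation introduced by $\bF_{t} \neq \bar{\bF}$ decays fast enough, for instance through a controlled-Markov-noise argument in the style of Borkar, so that the asymptotic drift is still governed by $\bA$. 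The Lipschitz regularity of $\bF$ in $\bV_{\bt}$, combined with the strict contraction of $\bPi^{f}\mathcal{T}^{\lambda,f}$ from Corollary~\ref{cor:1}, is what lets one close this argument and conclude almost-sure convergence to the same $\bt^{\star}$ in both cases.
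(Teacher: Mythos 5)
Your proposal follows essentially the same route as the paper: it casts the update in Eq.~\ref{eq:19} as a linear stochastic approximation driven by the Markov process $X_{t}$, uses Lemma~\ref{lm:1} for the stationary quantities $\bA$ and $\bb$, Lemma~\ref{lm:2} for negative definiteness, and the standard ODE/Robbins--Monro machinery of \cite{DBLP:journals/tac/TsitsiklisR97} and \cite{DBLP:books/sp/BenvenisteMP90} to conclude almost-sure convergence to $\bt^{\star} = -\bA^{-1}\bb$, identified through Eq.~\ref{eq:22} with the fixed point of $\bPi^{f}\mathcal{T}^{\lambda, f}$. Your handling of the parameter-dependent case is likewise aligned with the paper, which relies on the Lipschitz condition of Theorem~\ref{thm:contraction}(ii) and the assumed existence of $\bar{\bF}$ in Lemma~\ref{lm:1} rather than a separate derivation.
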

\section{Experiments}
\label{sec:empirical-results}
In this section, we delve into the impact of DTD($\lambda$)'s emphasizing effect, whether the emphasis function is predetermined or adapted during training. We examine scenarios involving visitation imbalance or noisy outcomes to determine if DTD($\lambda$) can address these challenges and enhance overall performance using predetermined emphasis. Regarding adaptive emphasis, we explore a more compact form, namely the absolute expected TD error, to assess the influence of non-stationary emphasis on the prediction tasks. Our findings demonstrate that, firstly, the update-rebalancing and noise-averse effects effectively handle inherent prediction difficulties; secondly, the promising adaptive emphasis surpasses numerous baselines across diverse tasks.
\subsection{Evaluation}
We choose the mean-square projected Bellman error (MSPBE) \cite{DBLP:conf/icml/SuttonMPBSSW09} as the performance metric, as it quantifies the deviation from the most representative functions attainable with the given features of $\mathcal{T} \bV_{\bt}$, where $\mathcal{T}$ is the Bellman operator. The MSPBE is defined as follows:
\begin{equation}
  \label{eq:23}
  \text{MSPBE}(\bt) = \| \bV_{\bt} - \bPi \mathcal{T} \bV_{\bt}\|_{\bD}.
\end{equation}
The experiments are carried out over 50 independent runs, each spanning 5000 environment steps. The depicted curves report the best performance with extensive parameter sweeping. They present the aggregated mean along with error bars representing the standard deviation. All of the problems are episodic, undiscounted, and involve only a fixed target policy.
\subsection{More or Less}
\label{sec:more-less}
To illustrate the impact of the emphasis in scenarios with the visitation imbalance, we examine a 5-state random-walk problem featuring three distinct initial state distributions. In each episode, the agent starts deterministically from either the leftmost, middle, or rightmost state. This selection of the initial state leads to varying visitation frequencies among the neighboring states. States near the initial choice are more frequently visited, while those farther away are less likely to be encountered. Consequently, these three initial state distributions result in overall state visitation frequencies that exhibit left-skewed, center-elevated, or right-skewed patterns. The chain includes two terminal states located at opposite ends, with all transitions uniformly distributed across states. Rewards are uniformly zero, except when transitioning into the right terminal state. Tabular features represent the state characteristics. The challenge with TD($\lambda$) arises from the tendency to update more frequently visited states heavily, while paying less attention to states that are visited infrequently. This discrepancy emerges due to the higher occurrence of more frequent states in the eligibility trace, resulting in more updates from ahead-time steps. This effect can be even more amplified if these states persist in the trace for an extended duration. Conversely, states that are infrequent visitors or have shorter durations in the trace undergo fewer updates. To rectify these imbalanced updates, our approach, DTD($\lambda$), addresses the shortage of total updates by increasing the update magnitude for infrequent states. The emphasis function we tailor is based on the inverse of the normalized empirical state visitation counts, which are then scaled to lie within the range of [0, 1] as recommended by Remark \ref{rm:1}. Finally, we take the square root to restore the original quantity.

\begin{figure*}[t]
\centering
\includegraphics[width=0.8\textwidth]{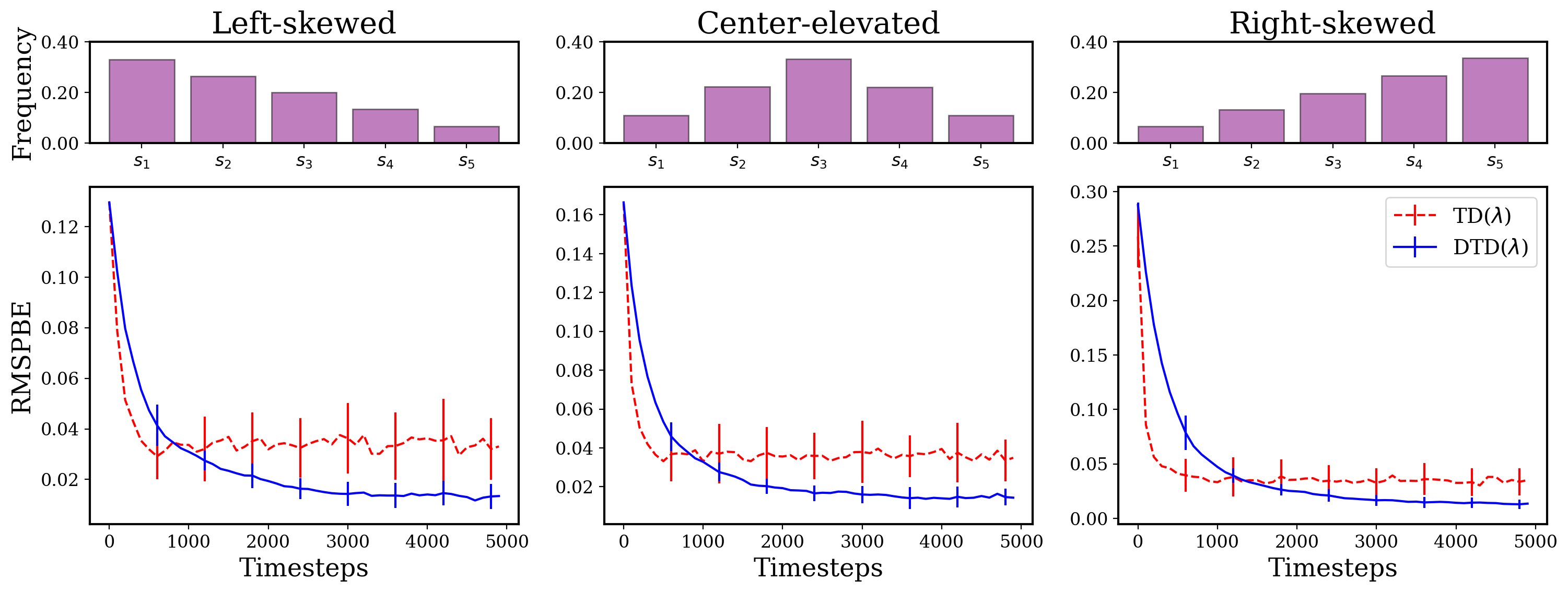}
\caption{Top: State visitation frequency for different states; Bottom: Learning curve of MSPBE for algorithmic comparison. The three tasks are based on three different initial distributions.}
\label{fig:1}
\end{figure*}

To demonstrate the efficacy of our method for combatting the noisy outcome in scenarios with the perturbed reward, we consider a larger problem with 10 states where transitions have a uniform reward level with added noises. In order to isolate the influence of visitation imbalance, the initial state is selected uniformly from all available states, and all transitions are executed uniformly. The noise is symmetric for the transition from a state but varies across states. We consider the noises as $\sigma = [0,1, 0.2, \dots, 1]$ for states $s_{1}, s_{2}, \dots, s_{10}$. Three different reward levels $r = [-1, 0, 1]$ are tested with varying difficulties. The reward that the agent actually receives is the reward level with an added Gaussian noise $\mathcal{N}(0, \sigma(s_{i}))$ for transition from $s_{i}$. Incorrect predictions can occur when the agent is unaware of underlying noises, leading it astray from the true value. Moreover, this can lead to even more severe consequences, as the erroneous TD error may propagate to other states. DTD($\lambda$) offers greater flexibility in addressing this situation through the emphasis function, which places resistance on states with high noise levels while prioritizing those with low noise levels. To that end, we introduce a prior into the design of the emphasis function, specifically the negative exponential of the noise levels, denoted as $\exp(-\sigma(s))$, to mitigate the influence of unpredictable outcomes. It is also scaled to lie within the range of [0, 1] and applied the square root.

\begin{figure}[h]
\centering
\includegraphics[width=0.8\columnwidth]{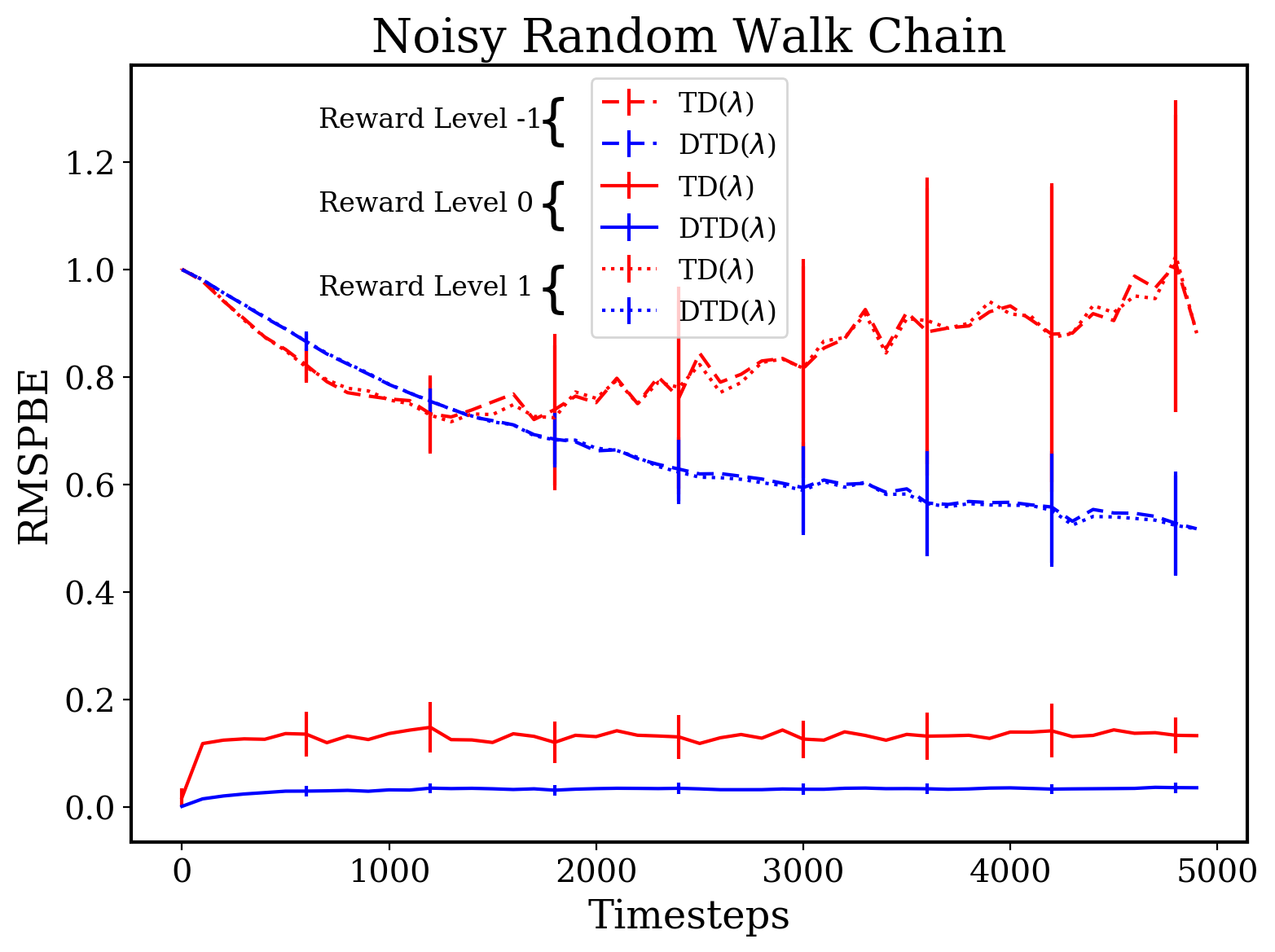}
\caption{Learning curve of MSPBE of different reward levels with added noises.}
\label{fig:2}
\end{figure}

The results depicted in Fig. \ref{fig:1} indicate that regardless of the skewness of the state visitation frequency, DTD($\lambda$) effectively rebalances updates to achieve improved overall predictions. While TD($\lambda$) shows faster progress in the early stages, the aliasing effect of TD error and the lack of updates for infrequent states become more pronounced, leading to its struggle in further reducing the error. In contrast, DTD($\lambda$) allocates more attention to those infrequent states, resulting in a more balanced update process. In the case illustrated in Fig. \ref{fig:2}, even at a zero reward level, TD($\lambda$) results in a larger prediction error with higher variation, while DTD($\lambda$) consistently maintains a relatively small prediction error with less variability. As the reward level becomes non-zero, the increased complexity involved in predicting the true value causes TD($\lambda$) to progressively deviate from its initial prediction. In contrast, DTD($\lambda$) effectively discerns different noise levels, leading to a reduction in prediction errors. From the learning curve, We hypothesize that with an increased computational budget, DTD($\lambda$) can yield a much lower prediction error.

The idea of allocating attention selectively can be enlightening. In reality, valuable states are often infrequently encountered, and achieving a goal can require substantial effort. By focusing more attention on these crucial outcomes, we can enhance the influence of pathways leading to them.

\subsection{Adaptive Emphasis}
\label{sec:non-stat-emph}
\begin{figure*}[t]
\centering
\includegraphics[width=1\textwidth]{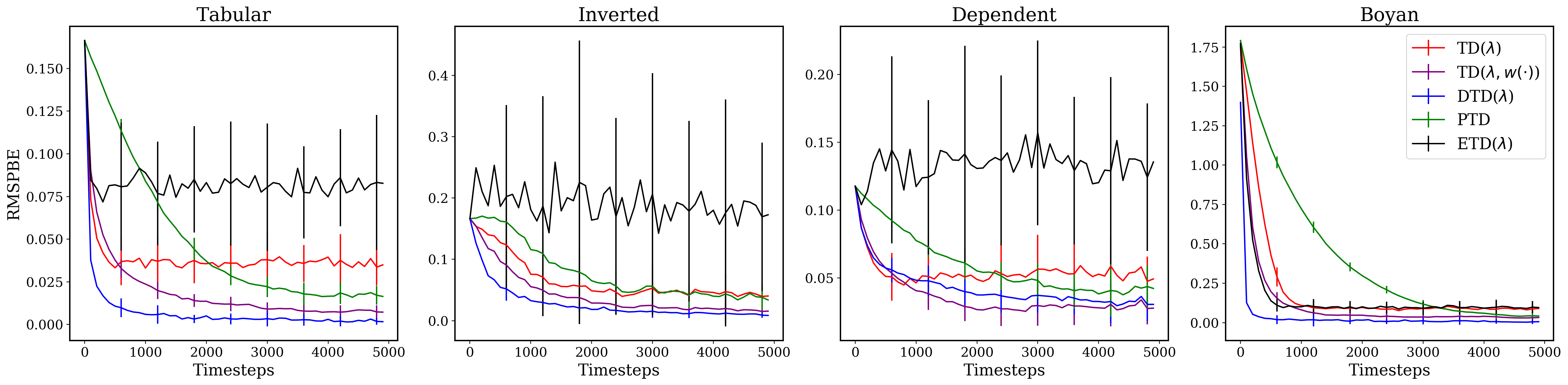}
\caption{Learning curve of MSPBE on 5-state random walk chain with tabular, inverted, and dependent feature representation and the 13-state Boyan chain. Baselines are chosen to be emphatic and with selective updating.}
\label{fig:3}
\end{figure*}
the predetermined emphasis can vary depending on the specific problems, making manual crafting challenging. Is it possible to devise a compact emphasis that directly aligns with the nature of the prediction task? In this part, we examine the parameter-dependent emphasis, namely the absolute expected TD error, evaluated using the true dynamics, to showcase the effectiveness of the prediction-oriented emphasis for accelerating learning. We investigate four additional tasks, three of which share the same setup as the 5-state problem discussed earlier, but the initial state distribution is set to the middle state by default. There involves three representations as introduced in \cite{DBLP:conf/icml/SuttonMPBSSW09}: tabular, inverted (inappropriate state generalization), and dependent (insufficient representation), posing aliasing and representation challenges that standard methods are difficult to solve. Due to space limits, we refer the reader to \cite{DBLP:conf/icml/SuttonMPBSSW09} for more detailed descriptions. The last task is a 13-state Boyan chain with 4 features \cite{DBLP:journals/ml/Boyan02}, which serves as a standard benchmark for evaluating TD-style algorithms. In addition to comparing DTD($\lambda$) with TD($\lambda$), we assess its performance against several baselines that incorporate varying levels of emphasis. These baselines include an on-policy emphatic variant of ETD($\lambda$) \cite{DBLP:journals/jmlr/SuttonMW16}, the preferential approach PTD \cite{DBLP:conf/icml/AnandP21}, and the selective updating TD($\lambda, w(\cdot)$) \cite{DBLP:journals/corr/abs-2202-09699}.

The results presented in Fig. \ref{fig:3} demonstrate that DTD($\lambda$) outperforms the other methods across the majority of tasks, exhibiting both rapid initial learning and minimal variability. Even under challenging representations, it can not only mitigate incorrect state aliasing where the update from one state only changes the parameters of other states, but also manifest the benefit of adaptive updating for limited capacity where the span of the feature space is insufficient to solve the task exactly. It is worth noting that ETD($\lambda$) may suffer from the high variance issue of the follow-on trace, which could explain its poor performance. On the other hand, PTD appears to interpolate between TD(0) update and no update, causing its updates to be centered around TD(0) and possibly missing out on the advantages of combining different $n$-step returns. The approach most closely related to ours is TD($\lambda, w(\cdot)$), which employs a similar eligibility trace. However, it does not take into account the importance of propagating the TD error. Comparing our approach to TD($\lambda, w(\cdot)$) is essentially a direct test of the significance of our emphasis factor multiplied by the TD error. The results clearly show that removing this emphasis factor significantly degrades the performance, underscoring its crucial role in amplifying the propagation of TD error and its relative influence to historical states when combined with the eligibility trace.

\section{Extendibility for DRL}
In this section, we delve deeper into the aspects of DTD, particularly its connection to advantage estimation and its relevance to prioritized sampling. The former is closely linked to the concept of discerning $\lambda$-returns, which holds the potential for further enhancing variance reduction. The latter aspect establishes a relationship with non-uniform sampling, wherein DTD(0) can yield a similar prioritization effect.
\label{sec:extending-drl}
\subsection{Discerning Advantage Estimator}
In the realm of DRL algorithms, the variance of policy gradients often becomes a bottleneck for overall performance, particularly in on-policy algorithms \cite{DBLP:conf/icml/SchulmanLAJM15} \cite{DBLP:journals/corr/SchulmanWDRK17}. Generalized Advantage Estimator (GAE) \cite{DBLP:journals/corr/SchulmanMLJA15} offers an effective approach to mitigate the high variance stemming from lengthy trajectory estimates. Notably, \cite{DBLP:journals/tog/PengALP18} and \cite{DBLP:journals/corr/abs-2302-00533} establish a close connection between GAE and the $\lambda$-return, albeit with a baseline function integrated to reduce the variance. Similarly, we can derive the Discerning Advantage Estimator (DAE), in the context of the discerning $\lambda$-return:
\label{sec:disc-advant-estim}
\begin{definition}
  \label{def:DAE}
  Discerning Advantage Estimator:
\begin{equation}
  \label{eqq:13}
  \begin{aligned}
    \hat{A}_{t}^{\text{DAE}(\lambda, \gamma, f)} = \frac{1}{f_{t}} \sum\limits_{n=0}^{\infty}(\gamma\lambda)^{n} \delta_{t + n} f_{t + n}.
  \end{aligned}
\end{equation}  
\end{definition}
Aside from reducing variance with the baseline, DAE incorporates emphasis to reweight the TD error terms. We hypothesize that a well-chosen emphasis function can additionally lower the variance of the advantage estimate, specifically by quantifying the variance of the $n$-step return.
\subsection{Connection to Prioritized Sampling}
\label{sec:conn-prior-sampl}
Prioritized experience replay (PER) \cite{DBLP:journals/corr/SchaulQAS15} highlights that the ``uniformness" of experience replay \cite{DBLP:journals/corr/MnihKSGAWR13} adheres to the same frequency as the original experiences, yet it fails to account for the significance of individual samples. This method updates the value function by assigning a priority to each sample, thus giving precedence to samples with higher significance, specifically those proportional to the sampled absolute TD error. This approach, referred to as the ``frequency-centric" approach, emphasizes rolling out these significant samples more frequently, leading to more updates. On the other hand, DTD($0$) follows a ``magnitude-centric" approach, increasing the magnitude of each update to directly address the imbalanced frequency. The following proposition demonstrates the connection between DTD($0$) and PER:
\begin{proposition}
  For a Markovian dataset $\mathcal{D}$ generated from $\pi$, of a size $N$, then:
  \begin{equation}
    \label{per:1}
    \begin{aligned}    
      & \mathbb{E}_{\text{uniform}}\left[f^{2}(s) \left( v^{\text{target}} - \hat{v}(s, \bt) \right)^{2}\right] \\
      & = c \cdot \mathbb{E}_{q}\left[\left(v^{\text{target}} - \hat{v}(s, \bt) \right)^{2}\right],
    \end{aligned}
  \end{equation}
  where
  \begin{equation}
    \label{per:2}
    \begin{aligned}
      q(s) =  \frac{f^{2}(s)}{\sum\limits_{s \in \mathcal{D}} f^{2}(s')}\quad  c = \frac{\sum\limits_{s \in \mathcal{D}} f^{2}(s')}{N}.
    \end{aligned}
  \end{equation}  
\end{proposition}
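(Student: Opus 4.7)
The plan is to recognize this proposition as essentially an importance sampling identity, so the proof should be a direct algebraic manipulation rather than requiring any deep probabilistic argument. The key observation is that both sides of the claimed equality, when viewed as empirical expectations over the finite Markovian dataset $\mathcal{D}$ of size $N$, collapse to the same weighted sum of squared errors; only the reweighting between the sampling distribution and the integrand differs.

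First I would unfold the left-hand side by noting that uniform sampling over $\mathcal{D}$ gives $\mathbb{E}_{\text{uniform}}[X(s)] = \frac{1}{N}\sum_{s \in \mathcal{D}} X(s)$, so
\begin{equation*}
  \mathbb{E}_{\text{uniform}}\!\left[f^{2}(s)\bigl(v^{\text{target}} - \hat{v}(s,\bt)\bigr)^{2}\right] = \frac{1}{N}\sum_{s \in \mathcal{D}} f^{2}(s)\bigl(v^{\text{target}} - \hat{v}(s,\bt)\bigr)^{2}.
\end{equation*}
Next I would expand the right-hand side by substituting the definitions of $q$ and $c$ from Eq. \ref{per:2} into $c \cdot \mathbb{E}_{q}[(v^{\text{target}} - \hat{v}(s,\bt))^{2}]$; the normalization constant $\sum_{s' \in \mathcal{D}} f^{2}(s')$ appearing in $q(s)$ cancels exactly against the same factor in $c$, leaving $\frac{1}{N}\sum_{s \in \mathcal{D}} f^{2}(s)\bigl(v^{\text{target}} - \hat{v}(s,\bt)\bigr)^{2}$. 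Comparing the two expressions yields the claim.

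The Markovian assumption on $\mathcal{D}$ is not invoked in the algebraic step itself; its role is merely to guarantee that uniform sampling over $\mathcal{D}$ asymptotically matches sampling according to the steady-state distribution $\bdp$, which is what makes the identity meaningful in the DTD(0) context by letting $f^{2}$ reshape the effective sampling distribution from $\bdp$ to one proportional to $\bdp(s) f^{2}(s)$. I would conclude with a short interpretive remark linking this to PER: the ``magnitude-centric'' scaling of the TD update by $f^{2}(s)$ under uniform sampling is equivalent, in expectation, to drawing samples from the prioritized distribution $q$, up to the global constant $c$ that can be absorbed into the learning rate.

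The proof has no serious obstacle; the only subtlety worth flagging is the interpretation of the two expectations as empirical averages over the same finite Markovian sample, and ensuring that the reader understands the identity holds sample-wise (not only in the population limit), so no ergodicity or mixing argument is needed to establish Eq. \ref{per:1} itself.
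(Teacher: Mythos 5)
Your proof is correct and coincides with the paper's own argument: both sides reduce, by expanding the uniform expectation as $\frac{1}{N}\sum_{s \in \mathcal{D}}$ and substituting the definitions of $q$ and $c$ so the normalizer $\sum_{s' \in \mathcal{D}} f^{2}(s')$ cancels, to the same weighted sum $\frac{1}{N}\sum_{s \in \mathcal{D}} f^{2}(s)\bigl(v^{\text{target}} - \hat{v}(s,\bt)\bigr)^{2}$. Your added remark that the Markov property of $\mathcal{D}$ is not needed for the sample-wise identity itself, only for its interpretation relative to the steady-state distribution, is accurate and consistent with how the paper uses the result.
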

What this conveys is that sampling from a priority distribution $q(s)$, scaled by a constant factor $c$, is analogous to uniform sampling with an emphasized objective. It is worth noting that any priority distribution of interest can be obtained by taking the square root of the corresponding emphasis function. This equivalence between DTD($0$) and PER is compelling, as it directly integrates the emphasis into the objective to achieve the same prioritization effect.

However, we refrain from specifying a fixed form for the emphasis function, as it should ideally be tailored to the specifics of each problem, considering factors like problem complexity and size. In practical applications, employing function approximations to extend the emphasis across similar states could prove useful. However, delving into the details of the estimation method for such function approximations lies beyond the scope of this study and presents an intriguing avenue for future research.

\section{Discussions on Possible Forms}
In this section, we open a dialogue on designing emphasis functions that would maximize the effectiveness of our approach.

The emphasis can be future-predicting, such as selecting the conditional entropy $f_{t + n} = H(G | A_{\leq t + n - 1}, S_{\leq t + n})$. This choice will prioritize the $n$-step return with maximal information contained in $(A_{t + n - 1}, S_{t + n})$ about the return $G$. To provide a more intuitive understanding, for $\lambda = 1$, the quantity $f_{t + n - 1} - f_{t + n}$ equates to the mutual information $\mathcal{I}(G | A_{\leq t+ n - 2}, S_{\leq t + n - 1}; A_{t + n - 1}, S_{t + n})$.

The emphasis can also be history-summarizing, such as choosing the negative exponential of variance of $n$-step return $f_{t + n} = \exp{(-\mathbb{V} [G_{t}^{(n)}])}$. This approach would resemble the experiments involving perturbed rewards, allowing the distinction of various return functions based on their noise levels. Such an approach could be beneficial for model-based planning, as the accumulation of errors in the model can render predictions less reliable \cite{DBLP:conf/nips/JannerFZL19}.

It is also intriguing to assess the expected immediate reward for each state, with which it becomes possible to categorize the state space based on higher rewards. This enables the allocation of more resources towards predicting the value function of these valuable states, enhancing their utility in control tasks.

\section{Related Work}
\label{sec:related-work}
In the context of TD learning, ETD \cite{DBLP:journals/jmlr/SuttonMW16} employs a follow-on trace coupled with an interest function to address the stability issue in the off-policy TD learning. PTD \cite{DBLP:conf/icml/AnandP21} introduces a preference function that is reversely related to $\lambda$, enabling interpolation between TD(0) update and no update to handle partial observation challenges. Additionally, TD($\lambda, w(\cdot)$) \cite{DBLP:journals/corr/abs-2202-09699} proposes a selective eligibility trace for reweighting historical states, similar to our approach. However, it disregards the consideration of the relative influence of propagating the TD error.

Emphasizing the significance of certain states is a recurring concept in various domains. \cite{DBLP:conf/nips/McLeodLSJKWW21} addresses the multi-prediction problem with a GVF \cite{DBLP:conf/atal/SuttonMDDPWP11} by focusing on learning a subset of states for each prediction, facilitated by an underlying interest function. \cite{DBLP:conf/nips/ImaniGW18} introduces an extension of emphatic weighting into the domain of control, resulting in an off-policy emphatic policy gradient that incorporates a state-dependent interest function. However, the process of adapting or selecting an appropriate interest function can be challenging. In response, \cite{DBLP:conf/nips/KlissarovFMAKS22} proposes a meta-gradient to dynamically adjust the interest, highlighting the advantages of identifying crucial states, thereby enhancing the efficacy of transfer learning across RL tasks. It is possible to combine our method with these techniques. The intriguing question, however, is how our approach can be most suitable for control problems. The notion of selective updating also finds application in option learning, manifesting either through initiation sets \cite{DBLP:journals/ai/SuttonPS99}, or via the utilization of an interest function \cite{DBLP:conf/aaai/KhetarpalKCBP20}. In model-based RL, \cite{DBLP:conf/icml/AbbasSTW20} combines the learned variance to adjust the weighting of targets derived from model planning to account for the limited model capacity, and \cite{DBLP:conf/nips/BuckmanHTBL18} leverages a bias-variance trade-off to determine a weighting scheme.

Regarding the prioritized sampling, PER \cite{DBLP:journals/corr/SchaulQAS15} addresses the initial step of considering the significance of different samples. In terms of the expected gradient perspective, \nocite{DBLP:conf/nips/FujimotoMP20} shows the correlation between an $l^{1}$ loss employing a prioritized sampling scheme and the uniformly sampled MSE loss. Additionally, \cite{DBLP:conf/uai/PanMFWYR022} establishes an alternative equivalence between the uniformly sampled cubic loss and the prioritized MSE loss.

\section{Conclusions}
\label{sec:conclusions}
In this paper, we introduced an emphasis-aware TD learning approach that takes into account the importance of historical states and the relative significance of propagating TD errors. Our method offers enhanced flexibility in selecting the emphasis. From various angles, we demonstrated its efficacy in challenging scenarios involving visitation imbalance and outcome noise. It not only restores balance to updates but also distinguishes between different noise levels, leading to improved predictions. We explored adaptive emphasis and confirmed its effectiveness in accelerating learning. Theoretical analysis established a contraction condition for algorithm convergence, offering practical insights into selecting the emphasis function. We also presented insights into extensions for DRL, including the proposed DAE and an equivalence between DTD(0) and PER. Additionally, we discussed potential forms of emphasis, which could be valuable when integrating with function approximations for future work.

\bibliography{main.bib}

\begin{thebibliography}{30}
\providecommand{\natexlab}[1]{#1}

\bibitem[{Abbas et~al.(2020)Abbas, Sokota, Talvitie, and White}]{DBLP:conf/icml/AbbasSTW20}
Abbas, Z.; Sokota, S.; Talvitie, E.; and White, M. 2020.
\newblock Selective Dyna-Style Planning Under Limited Model Capacity.
\newblock In \emph{Proceedings of the 37th International Conference on Machine Learning}, volume 119, 1--10.

\bibitem[{Anand and Precup(2021)}]{DBLP:conf/icml/AnandP21}
Anand, N.~V.; and Precup, D. 2021.
\newblock Preferential Temporal Difference Learning.
\newblock In \emph{Proceedings of the 38th International Conference on Machine Learning}, volume 139, 286--296.

\bibitem[{Benveniste, M{\'{e}}tivier, and Priouret(1990)}]{DBLP:books/sp/BenvenisteMP90}
Benveniste, A.; M{\'{e}}tivier, M.; and Priouret, P. 1990.
\newblock \emph{Adaptive Algorithms and Stochastic Approximations}, volume~22.
\newblock Springer.

\bibitem[{Boyan(2002)}]{DBLP:journals/ml/Boyan02}
Boyan, J.~A. 2002.
\newblock Technical Update: Least-Squares Temporal Difference Learning.
\newblock \emph{Mach. Learn.}, 49(2-3): 233--246.

\bibitem[{Buckman et~al.(2018)Buckman, Hafner, Tucker, Brevdo, and Lee}]{DBLP:conf/nips/BuckmanHTBL18}
Buckman, J.; Hafner, D.; Tucker, G.; Brevdo, E.; and Lee, H. 2018.
\newblock Sample-Efficient Reinforcement Learning with Stochastic Ensemble Value Expansion.
\newblock In \emph{Advances in Neural Information Processing Systems 31}, 8234--8244.

\bibitem[{Chelu et~al.(2022)Chelu, Borsa, Precup, and van Hasselt}]{DBLP:journals/corr/abs-2202-09699}
Chelu, V.; Borsa, D.; Precup, D.; and van Hasselt, H. 2022.
\newblock Selective Credit Assignment.
\newblock \emph{CoRR}, abs/2202.09699.

\bibitem[{Fujimoto, Meger, and Precup(2020)}]{DBLP:conf/nips/FujimotoMP20}
Fujimoto, S.; Meger, D.; and Precup, D. 2020.
\newblock An Equivalence between Loss Functions and Non-Uniform Sampling in Experience Replay.
\newblock In \emph{Advances in Neural Information Processing Systems 33}.

\bibitem[{Imani, Graves, and White(2018)}]{DBLP:conf/nips/ImaniGW18}
Imani, E.; Graves, E.; and White, M. 2018.
\newblock An Off-policy Policy Gradient Theorem Using Emphatic Weightings.
\newblock In \emph{Advances in Neural Information Processing Systems 31}, 96--106.

\bibitem[{Janner et~al.(2019)Janner, Fu, Zhang, and Levine}]{DBLP:conf/nips/JannerFZL19}
Janner, M.; Fu, J.; Zhang, M.; and Levine, S. 2019.
\newblock When to Trust Your Model: Model-Based Policy Optimization.
\newblock In \emph{Advances in Neural Information Processing Systems 32}, 12498--12509.

\bibitem[{Khetarpal et~al.(2020)Khetarpal, Klissarov, Chevalier{-}Boisvert, Bacon, and Precup}]{DBLP:conf/aaai/KhetarpalKCBP20}
Khetarpal, K.; Klissarov, M.; Chevalier{-}Boisvert, M.; Bacon, P.; and Precup, D. 2020.
\newblock Options of Interest: Temporal Abstraction with Interest Functions.
\newblock In \emph{The Thirty-Fourth {AAAI} Conference on Artificial Intelligence}, 4444--4451.

\bibitem[{Klissarov et~al.(2022)Klissarov, Fakoor, Mueller, Asadi, Kim, and Smola}]{DBLP:conf/nips/KlissarovFMAKS22}
Klissarov, M.; Fakoor, R.; Mueller, J.~W.; Asadi, K.; Kim, T.; and Smola, A.~J. 2022.
\newblock Adaptive Interest for Emphatic Reinforcement Learning.
\newblock In \emph{NeurIPS}.

\bibitem[{Ma(2023)}]{DBLP:journals/corr/abs-2302-00533}
Ma, J. 2023.
\newblock Distillation Policy Optimization.
\newblock \emph{CoRR}, abs/2302.00533.

\bibitem[{McLeod et~al.(2021)McLeod, Lo, Schlegel, Jacobsen, Kumaraswamy, White, and White}]{DBLP:conf/nips/McLeodLSJKWW21}
McLeod, M.; Lo, C.; Schlegel, M.; Jacobsen, A.; Kumaraswamy, R.; White, M.; and White, A. 2021.
\newblock Continual Auxiliary Task Learning.
\newblock In \emph{Advances in Neural Information Processing Systems 34}, 12549--12562.

\bibitem[{Mnih et~al.(2013)Mnih, Kavukcuoglu, Silver, Graves, Antonoglou, Wierstra, and Riedmiller}]{DBLP:journals/corr/MnihKSGAWR13}
Mnih, V.; Kavukcuoglu, K.; Silver, D.; Graves, A.; Antonoglou, I.; Wierstra, D.; and Riedmiller, M.~A. 2013.
\newblock Playing Atari with Deep Reinforcement Learning.
\newblock \emph{CoRR}, abs/1312.5602.

\bibitem[{Pan et~al.(2022)Pan, Mei, Farahmand, White, Yao, Rohani, and Luo}]{DBLP:conf/uai/PanMFWYR022}
Pan, Y.; Mei, J.; Farahmand, A.; White, M.; Yao, H.; Rohani, M.; and Luo, J. 2022.
\newblock Understanding and mitigating the limitations of prioritized experience replay.
\newblock In \emph{Proceedings of the Thirty-Eighth Conference on Uncertainty in Artificial Intelligence}, volume 180, 1561--1571.

\bibitem[{Peng et~al.(2018)Peng, Abbeel, Levine, and van~de Panne}]{DBLP:journals/tog/PengALP18}
Peng, X.~B.; Abbeel, P.; Levine, S.; and van~de Panne, M. 2018.
\newblock DeepMimic: example-guided deep reinforcement learning of physics-based character skills.
\newblock \emph{{ACM} Trans. Graph.}, 37(4): 143.

\bibitem[{Schaul et~al.(2016)Schaul, Quan, Antonoglou, and Silver}]{DBLP:journals/corr/SchaulQAS15}
Schaul, T.; Quan, J.; Antonoglou, I.; and Silver, D. 2016.
\newblock Prioritized Experience Replay.
\newblock In \emph{4th International Conference on Learning Representations}.

\bibitem[{Schulman et~al.(2015)Schulman, Levine, Abbeel, Jordan, and Moritz}]{DBLP:conf/icml/SchulmanLAJM15}
Schulman, J.; Levine, S.; Abbeel, P.; Jordan, M.~I.; and Moritz, P. 2015.
\newblock Trust Region Policy Optimization.
\newblock In \emph{Proceedings of the 32nd International Conference on Machine Learning}, volume~37, 1889--1897.

\bibitem[{Schulman et~al.(2016)Schulman, Moritz, Levine, Jordan, and Abbeel}]{DBLP:journals/corr/SchulmanMLJA15}
Schulman, J.; Moritz, P.; Levine, S.; Jordan, M.~I.; and Abbeel, P. 2016.
\newblock High-Dimensional Continuous Control Using Generalized Advantage Estimation.
\newblock In \emph{4th International Conference on Learning Representations}.

\bibitem[{Schulman et~al.(2017)Schulman, Wolski, Dhariwal, Radford, and Klimov}]{DBLP:journals/corr/SchulmanWDRK17}
Schulman, J.; Wolski, F.; Dhariwal, P.; Radford, A.; and Klimov, O. 2017.
\newblock Proximal Policy Optimization Algorithms.
\newblock \emph{CoRR}, abs/1707.06347.

\bibitem[{Sutton(1988)}]{DBLP:journals/ml/Sutton88}
Sutton, R.~S. 1988.
\newblock Learning to Predict by the Methods of Temporal Differences.
\newblock \emph{Mach. Learn.}, 3: 9--44.

\bibitem[{Sutton and Barto(2018)}]{Sutton1998}
Sutton, R.~S.; and Barto, A.~G. 2018.
\newblock \emph{Reinforcement Learning: An Introduction}.
\newblock The MIT Press, second edition.

\bibitem[{Sutton et~al.(2009)Sutton, Maei, Precup, Bhatnagar, Silver, Szepesv{\'{a}}ri, and Wiewiora}]{DBLP:conf/icml/SuttonMPBSSW09}
Sutton, R.~S.; Maei, H.~R.; Precup, D.; Bhatnagar, S.; Silver, D.; Szepesv{\'{a}}ri, C.; and Wiewiora, E. 2009.
\newblock Fast gradient-descent methods for temporal-difference learning with linear function approximation.
\newblock In \emph{Proceedings of the 26th Annual International Conference on Machine Learning}, volume 382, 993--1000.

\bibitem[{Sutton, Mahmood, and White(2016)}]{DBLP:journals/jmlr/SuttonMW16}
Sutton, R.~S.; Mahmood, A.~R.; and White, M. 2016.
\newblock An Emphatic Approach to the Problem of Off-policy Temporal-Difference Learning.
\newblock \emph{J. Mach. Learn. Res.}, 17: 73:1--73:29.

\bibitem[{Sutton et~al.(2011)Sutton, Modayil, Delp, Degris, Pilarski, White, and Precup}]{DBLP:conf/atal/SuttonMDDPWP11}
Sutton, R.~S.; Modayil, J.; Delp, M.; Degris, T.; Pilarski, P.~M.; White, A.; and Precup, D. 2011.
\newblock Horde: a scalable real-time architecture for learning knowledge from unsupervised sensorimotor interaction.
\newblock In \emph{10th International Conference on Autonomous Agents and Multiagent Systems}, 761--768.

\bibitem[{Sutton, Precup, and Singh(1999)}]{DBLP:journals/ai/SuttonPS99}
Sutton, R.~S.; Precup, D.; and Singh, S. 1999.
\newblock Between MDPs and Semi-MDPs: {A} Framework for Temporal Abstraction in Reinforcement Learning.
\newblock \emph{Artif. Intell.}, 112(1-2): 181--211.

\bibitem[{Tsitsiklis and Roy(1997)}]{DBLP:journals/tac/TsitsiklisR97}
Tsitsiklis, J.~N.; and Roy, B.~V. 1997.
\newblock An analysis of temporal-difference learning with function approximation.
\newblock \emph{{IEEE} Trans. Autom. Control.}, 42(5): 674--690.

\bibitem[{Wang, Liu, and Li(2020)}]{DBLP:conf/aaai/WangLL20}
Wang, J.; Liu, Y.; and Li, B. 2020.
\newblock Reinforcement Learning with Perturbed Rewards.
\newblock In \emph{The Thirty-Fourth {AAAI} Conference on Artificial Intelligence}, 6202--6209.

\bibitem[{Yu(2010)}]{DBLP:conf/icml/Yu10}
Yu, H. 2010.
\newblock Convergence of Least Squares Temporal Difference Methods Under General Conditions.
\newblock In \emph{Proceedings of the 27th International Conference on Machine Learning}, 1207--1214.

\bibitem[{Yu and Bertsekas(2009)}]{DBLP:journals/tac/YuB09}
Yu, H.; and Bertsekas, D.~P. 2009.
\newblock Convergence Results for Some Temporal Difference Methods Based on Least Squares.
\newblock \emph{{IEEE} Trans. Autom. Control.}, 54(7): 1515--1531.

\end{thebibliography}
\end{document}